\def\eqref#1{equation~\ref{#1}}
\def\1{\bm{1}}
\def\valpha{{\bm{\alpha}}}
\def\vm{{\bm{m}}}
\def\vx{{\bm{x}}}
\def\vz{{\bm{z}}}
\def\mM{{\bm{M}}}
\DeclareMathAlphabet{\mathsfit}{\encodingdefault}{\sfdefault}{m}{sl}
\SetMathAlphabet{\mathsfit}{bold}{\encodingdefault}{\sfdefault}{bx}{n}
\def\gS{{\mathcal{S}}}
\newcommand{\E}{\mathbb{E}}
\newcommand{\R}{\mathbb{R}}
\newtheorem{proposition}{Proposition}
\newtheorem{theorem}{Theorem}
\Crefname{equation}{Eq.}{Eqns.}
\Crefname{figure}{Fig.}{Figs.}
\icmltitlerunning{Memory Augmented Generative Adversarial Networks for Anomaly Detection}
\begin{document}

\twocolumn[
\icmltitle{Memory Augmented Generative Adversarial Networks for Anomaly Detection}



\begin{icmlauthorlist}
\icmlauthor{Ziyi Yang}{me}
\icmlauthor{Teng Zhang}{mse}
\icmlauthor{Iman Soltani Bozchalooi}{ford}
\icmlauthor{Eric Darve}{me,icme}
\end{icmlauthorlist}

\icmlaffiliation{me}{Department of Mechanical Engineering, Stanford University}
\icmlaffiliation{mse}{Department of Management Science and Engineering, Stanford University}
\icmlaffiliation{icme}{Institute for Computational and Mathematical Engineering, Stanford University}
\icmlaffiliation{ford}{Ford Greenfield Labs}

\icmlcorrespondingauthor{Ziyi Yang}{ziyi.yang@stanford.edu}
\icmlkeywords{Anomaly Detection}

\vskip 0.3in
]



\printAffiliationsAndNotice{} 

\begin{abstract}
In this paper, we present a memory-augmented algorithm for anomaly detection. Classical anomaly detection algorithms focus on learning to model and generate normal data, but typically guarantees for detecting anomalous data are weak. The proposed Memory Augmented Generative Adversarial Networks (MEMGAN) interacts with a memory module for both the encoding and generation processes. Our algorithm is such that most of the \textit{encoded} normal data are inside the convex hull of the memory units, while the abnormal data are isolated outside. Such a remarkable property leads to good (resp.\ poor) reconstruction for normal (resp.\ abnormal) data and therefore provides a strong guarantee for anomaly detection. Decoded memory units in MEMGAN are more interpretable and disentangled than previous methods, which further demonstrates the effectiveness of the memory mechanism. Experimental results on twenty anomaly detection datasets of CIFAR-10 and MNIST show that MEMGAN demonstrates significant improvements over previous anomaly detection methods.
\end{abstract}

\section{Introduction}
\label{intro}
Anomaly detection is the identification of abnormal events, items or data. It has been widely used in many fields, e.g., fraud detection \citep{fraud}, medical diagnosis \citep{schlegl2017unsupervised} and network intrusion detection \citep{net}. Anomaly detection usually is formulated as an unsupervised learning problem where only normal data are available for training while anomalous data are not known a priori (in this paper, ``normal'' does not refer to Gaussian distributions unless specified). Deep learning \citep{deepbook, deepnature} has achieved great success in fields like computer vision \citep{resnet} and natural language processing \citep{bert}. Efforts have been made to apply deep learning to anomaly detection \citep{erfani2016high, schlegl2017unsupervised, chen2017outlier, ruff2018deep} and have shown encouraging results. However, most of existing methods that rely on deep generative models \citep{gan} focus on modeling the normal data distribution using reconstruction heuristics or adversarial training. The proposed optimization objectives are not specifically designed for anomaly detection or the reasons why the proposed models can detect anomalies are ambiguous. Memory-augmented neural network (MANN) is an emerging and powerful class of deep learning algorithms in which the capabilities of networks are extended by external memory resources with an attention mechanism \citep{ntm, metantm, mem_iclr}. Only recently has MANN been applied to anomaly detection; for example, MEMAE \citep{memae} presents a memory augmented autoencoder. Although MANN has shown encouraging results in anomaly detection, theoretical reasons why memory augmentation helps anomaly detection are still unclear.

In this paper, we propose a memory augmented generative adversarial networks (MEMGAN) for anomaly detection. MEMGAN is built upon a bidirectional GAN that includes a generator $G$, encoder $E$, and discriminator $D_{\vx\vz}$. MEMGAN also explicitly maintains an external memory module to store the features of normal data. The key contribution is our definition of the loss function which includes a memory projection loss and mutual information loss. As a result, the memory units form a special geometric structure. After training, the encoder maps the normal data to a convex set in the encoded space, and our loss functions ensure that memory modules lie on the boundary of the convex hull of the encoded normal data. This results in high quality memory modules, that tend to be disentangled and interpretable compared to previous algorithms. In our benchmarks, this leads to higher quality memory modules and much lower detection error.


We conduct detailed experimental and theoretical analysis on how and why MEMGAN can detect anomalies. Theoretical analysis shows that the support of encoded normal data is a convex polytope and the optimal memory units are the vertices of the polytope (assuming hyperparameters for the network are chosen appropriately). This conclusion agrees with a remarkable observation that the encoded normal data reside inside the convex hull of the memory units. The boundary also ensures that encoded abnormal data lie outside the convex polytope. Visualization of decoded memory units demonstrates that MEMGAN successfully capture key features of the normal data. Decoded memory units of MEMGAN display significantly higher quality than previous memory augmented methods.

We evaluate MEMGAN on twenty real-world datasets. The experiments on MNIST show that MEMGAN achieves significant and consistent improvements as compared to baselines (including DSVDD). On CIFAR-10 datasets, MEMGAN shows performance on par with DSVDD and superior to the other models we tested. Ablation study further confirms the effectiveness of MEMGAN.


\section{Related Work}

One major type of anomaly detection algorithms is generative models (autoencoders, variational autoencoders, GANs, etc.) that learn the distribution of normal data. Generative Probabilistic Novelty Detection (GPND) \citep{pidhorskyi2018generative} adopts an adversarial autoencoder to create a low-dimensional representation and compute how likely one sample is anomalous. In \cite{an2015variational} researchers train a variational autoencoder obtain the reconstruction probability through Monte-Carlo sampling as the anomaly score. Anomalyn Detection GAN (ADGAN) \citep{schlegl2017unsupervised} trains a regular GAN model on normal data and projects data back to latent space by gradient descent to compute the reconstruction loss as anomaly score function. Consistency-based anomaly detection (ConAD) (CONAD) \citep{nguyen2019anomaly} employs multiple-hypotheses networks to model normal data distributions. Regularized Cycle-Consistent GAN (RCGAN) \citep{rcgan} introduces a regularized distribution to bias the generation of the bidirectional GAN towards normal data and provides theoretical support for the guarantee of detection.

Other representative anomaly detection algorithms include One-Class Support Vector Machines (OC-SVM) \cite{scholkopf2000support} that models a distribution to encompasses normal data, and those are out-of-distribution are labeled as abnormal. Researchers also utilizes the softmax score distributions from a pretrained classifier by temperature scaling and perturbing inputs \cite{liang2017enhancing}. Deep Support Vector Data Description (DSVDD) jointly trains networks to extract common factors of variation from normal data together, together with a data-enclosing hypersphere in output space \cite{ruff2018deep}. 

Memory augmented neural networks have been proved effective in many applications, e.g., questions answering, graph traversal tasks and few-shot learning \citep{hybrid, oslmem}. In such models, neural networks have access to external memory resources and interact with them by reading and writing operations \citep{ntm} to resemble Von Neumann architecture in modern computers. Recently, \citet{memae} proposes to combine memory mechanism and autoencoders for anomaly detection. However, as mentioned in the introduction, the guarantees for anomaly detection are weak for most of the models above, or their training objective functions are not specifically designed for anomaly detection.

\section{Problem Statement}

Anomaly detection refers to the task of identifying anomalies from what are believed to be normal. The normal data can be described by a probability density function $q(\vx)$ \citep{rcgan}. In the training phase, we want to learn an anomaly score function $A(\vx)$ given only normal data. The function $A(\vx)$ is expected to assign larger score for anomalous data than normal ones.

Deep generative models are capable of learning the normal data distribution $q(\vx)$, e.g., generative adversarial networks (GANs) proposed in \citet{gan}. GAN trains a discriminator $D$ and a generator $G$ such that $D$ learns to distinguish real data sampled using $q(\vx)$ from synthetic data generated by $G$ using a random distribution $p(\vz)$. The minmax optimization objective of GANs is given as:
\begin{equation}
\label{eq:gan}
\begin{aligned}
\min_{G}\max_{D}V(D, G) &= \E_{\vx\sim q(\vx)}[\log D(\vx)]\\
&\; + \; \E_{\vz\sim p(\vz)}[\log(1 - D(G(\vz)))]
\end{aligned}
\end{equation}
where $p(\vz)$ denotes a random distribution such as uniform distributions. As shown in \citet{gan}, the optimal generator distribution $p(\vx)$ matches with the data distribution, i.e., $p(\vx) = q(\vx)$.

In order to provide a convenient projection from the data space to the latent space, bidirectional GAN proposed in \citet{ali} and \citet{afl} includes an encoder $E$ network with the following optimization objective:
\begin{align}
\min_{E, G}\max_{D_{\vx\vz}}V_\text{ALI}&(D_{\vx\vz}, G, E) = \E_{\vx\sim q(\vx)}[\log D_{\vx\vz}(\vx, E(\vx))] \notag \\
&\; + \; \E_{\vz\sim p(\vz)}[\log(1 - D_{\vx\vz}(G(\vz), \vz))]
\label{eq:ali}
\end{align}
where $D_{\vx\vz}$ denotes a discriminator with dual inputs data $\vx$ and the latent variable $\vz$. Its output is the probability that $\vx$ and $\vz$ are from the real data joint distribution $q(\vx, \vz)$. ALI attempts to match the generator joint distribution $q(\vx, \vz) = q(\vx)q(\vz|\vx)$ and the data joint distribution $p(\vx, \vz) = p(\vz)p(\vx|\vz)$. It follows that
\begin{theorem}
\label{thm:match}
The optimum of the encoder, generator and discriminator in ALI is a saddle point of \Cref{eq:ali} if and only if the encoder joint distribution matches with the generator joint distribution, i.e., $q(\vx, \vz) = p(\vx, \vz)$.
\end{theorem}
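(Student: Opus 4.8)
The plan is to adapt the two-step argument of \citet{gan} to the joint-distribution setting. Throughout, write $q(\vx,\vz) = q(\vx)\,q(\vz\mid\vx)$ for the encoder joint distribution induced by sampling $\vx\sim q(\vx)$ and then passing through $E$, and $p(\vx,\vz) = p(\vz)\,p(\vx\mid\vz)$ for the generator joint distribution induced by sampling $\vz\sim p(\vz)$ and then passing through $G$. Rewriting the two expectations in \Cref{eq:ali} as integrals against these joints gives
\[
V_\text{ALI}(D_{\vx\vz}, G, E) = \int\!\!\int q(\vx,\vz)\log D_{\vx\vz}(\vx,\vz)\,d\vx\,d\vz + \int\!\!\int p(\vx,\vz)\log\bigl(1 - D_{\vx\vz}(\vx,\vz)\bigr)\,d\vx\,d\vz .
\]

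First I would fix $G$ and $E$ and compute the optimal discriminator pointwise. For each $(\vx,\vz)$ the integrand has the form $a\log t + b\log(1-t)$ with $a = q(\vx,\vz)$ and $b = p(\vx,\vz)$, which is maximized over $t\in[0,1]$ at $t = a/(a+b)$; hence the optimal discriminator is
\[
D_{\vx\vz}^\ast(\vx,\vz) = \frac{q(\vx,\vz)}{q(\vx,\vz) + p(\vx,\vz)} .
\]
Next I would substitute $D_{\vx\vz}^\ast$ back into $V_\text{ALI}$ and recognize the result as a Jensen--Shannon divergence. A short calculation gives $\max_{D_{\vx\vz}} V_\text{ALI} = -\log 4 + \KL\!\bigl(q \,\|\, m\bigr) + \KL\!\bigl(p \,\|\, m\bigr)$, where $m = \tfrac12(q + p)$ is the mixture of the two joints. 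Since each KL term is nonnegative and their sum vanishes exactly when $q = m = p$, this quantity is bounded below by $-\log 4$, with equality if and only if $q(\vx,\vz) = p(\vx,\vz)$, at which point $D_{\vx\vz}^\ast \equiv \tfrac12$.

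Finally, I would translate this into the saddle-point statement and establish both directions. For the reverse direction, if $q = p$ then $D_{\vx\vz}^\ast \equiv \tfrac12$ maximizes $V_\text{ALI}$ over $D_{\vx\vz}$ for the fixed $(G,E)$, and simultaneously $(G,E)$ attains the global minimum $-\log 4$ of $\max_{D_{\vx\vz}} V_\text{ALI}$, so the triple is a saddle point. For the forward direction, at any saddle point the inner maximization forces $D_{\vx\vz} = D_{\vx\vz}^\ast$, while the outer minimization forces $(G,E)$ to achieve the global minimum of $\max_{D_{\vx\vz}} V_\text{ALI}$; by the strict positivity of the Jensen--Shannon divergence away from coincidence, this minimum is attained only when $q = p$. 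The step I expect to demand the most care is the forward implication: I must argue that a saddle point of the minmax problem necessarily realizes the optimal value $-\log 4$ rather than merely being some stationary point, and I must handle the standard technical caveat that $D_{\vx\vz}^\ast$ is only determined on the support of $q + p$ and that mutual absolute continuity of the two joints is required for the divergence identity to be well defined.
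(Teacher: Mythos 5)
Your proof is correct and follows essentially the same route as the source of this result: the paper itself does not prove Theorem~\ref{thm:match} but imports it from \citet{ali} and \citet{afl}, whose argument is exactly the Goodfellow-style two-step computation you give --- pointwise optimal discriminator $D_{\vx\vz}^\ast = q/(q+p)$, then the Jensen--Shannon identity $\max_{D_{\vx\vz}} V_\text{ALI} = -\log 4 + \KL(q\,\|\,m) + \KL(p\,\|\,m)$ with equality to $-\log 4$ iff $q = p$. Your treatment of both saddle-point directions, including the observation that a saddle point must realize the minimax value, matches the cited proofs, so there is nothing substantive to add.
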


\section{Methodology}
\subsection{Memory Augmented Bidirectional GAN}

The fundamental building block for MEMGAN is the bidirectional GAN proposed in ALI. Besides the bidirectional projection in ALI, MEMGAN also maintains an external memory $\mM \in \R^{n\times d}$, where $n$ represents the number of memory slots and $d$ denotes the dimension of memory unit. In other words, each row of the memory matrix $\mM$ is a memory unit.

Specifically, the adversarial loss in MEMGAN is as follows:
\begin{align}
\min_{E, G}\max_{D_{\vx\vz}} V_\text{MEM}&(D_{\vx\vz}, G, E) = \E_{\vx\sim q(\vx)}[\log D_{\vx\vz}(\vx, E(\vx))] \notag\\
&\; + \; \E_{\vz\sim \mM}[\log(1 - D_{\vx\vz}(G(\vz), \vz))]
\label{eq:adv_loss}
\end{align}
where $\vz \sim \mM$ denotes sampling the latent variable $\vz$ as a linear combination of the memory units in $\mM$ \textbf{with positive coefficients,} i.e., $\vz$ is a convex combination of rows in $\mM$. Note that this step is distinct from classical GANs that latent variables are sampled from a random distribution, e.g. Gaussian distribution. We will see in the next section that sampling from memory units confines the encoded normal data in a convex polytope and makes memory units more interpretable, which leads to strong guarantee for anomaly detection.

\subsection{Cycle Consistency}

The cycle consistency is a desirable property in bidirectional GANs \citep{alice} and data translation \citep{unpaired}. It requires that a data example $\vx$ matches with its reconstruction $G(E(\vx))$, which can be fulfilled by minimizing the norm of difference between $\vx$ and its reconstruction \citep{unpaired} or by adversarial training through a discriminator \citep{alice, rcgan}. In MEMGAN, \textbf{the reconstruction error} will be used as the anomaly score. 

In order to guarantee a good reconstruction for normal data $\vx\sim q(\vx)$, MEMGAN enforces the cycle consistency by minimizing the reconstruction error. One key difference is instead of directly generating from the encoded $E(\vx)$, MEMGAN applies a linear transformation in the latent space using an attention algorithm with the external memory $\mM$. The cycle consistency loss in MEMGAN is given as follows:
\begin{equation}
\label{eq:recon_loss}
l_{\text{cyc}} = \E_{\vx\sim q(\vx)}||\vx - G(P(E(\vx)))||_2 
\end{equation}
where $P$ denotes a ``projection'' of the encoded $E(\vx)$ onto the subspace spanned by memory units (the rows of $\mM$). The exact equation is
\begin{equation}
\begin{aligned}
\label{eq:proj}
P(\vz) &= \mM^T\valpha \\
\end{aligned}
\end{equation}
with $\valpha = \textrm{softmax}(\mM\vz)$.

The reconstruction loss is also used to update memory units such that the features of normal data are designed to minimize the reconstruction error.

\subsection{Memory Projection Loss}

To ensure that the linear combinations of memory units can represent the encodings of normal data, we include a third optimization objective, the memory projection loss:
\begin{equation}
\label{eq:enc}
l_{\text{proj}} = \E_{\vx\sim q(\vx)}\|E(\vx) - P(E(\vx))\|_2
\end{equation}
As explained later, the memory projection loss together with mutual information loss below constitutes a strong guarantee for anomaly detection. We also conduct an ablation study in \Cref{sec:dis} to show that without $l_{\text{proj}}$, memory units tend to cluster around a sub-region in the latent space and fail to enclose the encoded normal data.

\subsection{Mutual Information Loss}

In order to learn disentangled and interpretable memory units and maximize the mutual information between memory units and the normal data, we introduce the mutual information loss similar to InfoGAN \citep{infogan}. First we randomly sample a vector $\valpha$, with positive coefficients and such that the sum of its elements is 1 (by applying softmax to a random vector); then we use $\valpha$ to compute a linear combination of the memory units, denoted as $\vz$. $\vz$ is mapped to data space by the generator and encoded back as $\vz' = E(G(\vz))$. $\vz'$ is then projected onto the memory units to compute new projection coefficients $\valpha'$. Finally, the mutual information loss is defined as the cross entropy between $\valpha'$ and $\valpha$:
\begin{equation}
\begin{aligned}
\label{eq:mi}
\vz &= \mM^T \valpha\\
\vz' &= E(G(\vz))\\
\valpha'& = \textrm{softmax}(\mM\vz') \\
l_{\text{mi}} &= -\sum_{i = 1}^{n}\valpha_i\log\valpha'_i
\end{aligned}
\end{equation}
where $n$ is the number of memory units. $l_{\text{mi}}$ essentially ensures that the structural information is consistent between a sampled memory information $\vz$ and the generated $G(\vz)$.

We provide the complete and detailed training process of MEMGAN in \Cref{alg:train}. The memory matrix $\mM$ is optimized as a trainable variable and its gradient is computed from $l_{\text{cyc}}, l_{\text{proj}}$ and $l_{\text{mi}}$. Note in order to avoid the vanishing gradient problem \citep{gan}, the generator is updated to minimize $-\log D_{\vx\vz}(G(\vz), \vz)$ instead of $\log (1 - D_{\vx\vz}(G(\vz), \vz)$. After MEMGAN is trained on normal data, in the test phase, the anomaly score for $\vx$ is defined as 
\begin{equation}
A(\vx) = \|\vx - G(P(E(\vx)))\|_2
\label{eq:ano_score}
\end{equation}
This function measures how well an example can be reconstructed by generating from $E(\vx)$'s projection onto the subspace spanned by the memory units. In the next section, we will further explain the effectiveness of this anomaly score.

\begin{algorithm}[tb]
   \caption{The training process of MEMGAN}
   \label{alg:train}
\begin{algorithmic}
   \STATE {\bfseries Input:} a set of normal data $\vx$, the encoder $E$, the generator $G$, the discriminator $D_{\vx\vz}$ and the memory matrix $\mM$.
   \FOR{number of epochs}
   \FOR{$k$ steps}
   \STATE Sample a minibatch of $m$ normal data from $\vx$.
   \STATE Sample a minibatch of $m$ random convex combination of memory units $\vz$.
   \STATE Update the discriminator using its stochastic gradient:
    \begin{equation*}
    \begin{split}
    \nabla_{\theta_{D_{\vx\vz}}}&\frac{1}{m}\sum_{i = 1}^{m}[-\log D_{\vx\vz}(\vx^{(i)}, E(\vx^{(i)})) \\
    &- \log(1 - D_{\vx\vz}(G(\vz^{(i)}), \vz^{(i)})]
    \end{split}
    \end{equation*}
    \STATE Compute $l^{(i)}_{\text{cyc}}$ and $l^{(i)}_{\text{proj}}$ with $\vx^{(i)}$, compute $l^{(i)}_{\text{mi}}$ with $\vz^{(i)} $ for $i = 1\dots m$.
    \STATE Update the Encoder $E$ using its stochastic gradient:
    \begin{equation*}
    \nabla_{\theta_{G}}\frac{1}{m}\sum_{i = 1}^{m}[\log D_{\vx\vz}(\vx^{(i)}, E(\vx^{(i)})) + l^{(i)}_{\text{cyc}} + l^{(i)}_{\text{proj}} + l^{(i)}_{\text{mi}}]
    \end{equation*}
    \STATE Update the Generator $G$ using its stochastic gradient:
    \begin{equation*}
    \nabla_{\theta_{G}}\frac{1}{m}\sum_{i = 1}^{m}[-\log D_{\vx\vz}(G(\vz^{(i)}), \vz^{(i)}) + l^{(i)}_{\text{cyc}} + l^{(i)}_{\text{mi}}]
    \end{equation*}
    
    \STATE Update the Memory $\mM$ using its stochastic gradient:
    \begin{equation*}
    \nabla_{\theta_{\mM}}\frac{1}{m}\sum_{i = 1}^{m}(l^{(i)}_{\text{cyc}} + l^{(i)}_{\text{proj}} + l^{(i)}_{\text{mi}})
    \end{equation*}
   \ENDFOR
   \ENDFOR
\end{algorithmic}
\end{algorithm}
\section{The Mechanism of MEMGAN}
Previous literature on anomaly detection mostly focus on demonstrating the effectiveness of models by showing results on test dataset, while efforts on why and how the proposed algorithms work are lacking. In this section, we will present an explanation for the mechanism of MEMGAN. First, we will show that the memory units obtained by MEMGAN successfully capture high quality latent representations of the normal data. Second, by visualizing the memory units and encoded normal \& abnormal data with a 2D projection, we find that the memory units form a convex hull of the normal data and separate them from abnormal data. This property provides a strong guarantee for anomaly detection.

\subsection{What Do Memory Units Memorize?}
 After training MEMGAN on normal data, one natural question to ask is what exactly do the memory units store. In \Cref{fig:mem}, we visualize the decoding of memory units, i.e., inputting the rows of the memory matrix $\mM$ to the generator after training on images of a digit from MNIST dataset (i.e. treat each digit as the normal class). Results from MEMGAN are shown in the first and second row, as well as the first two sub-figures in the third row. Each sub-figure contains eight samples of 50 decoded memory units due to the space limit. Examples of all 50 decoded units are available in the supplementary material. The last sub-figure in the third row is directly taken from the original paper of MEMAE (a previous memory augmented anomaly detection algorithm) which presented four decoded memory units trained on the digit 9.

The memory units from MEMGAN successfully learn latent representations of normal data. The generated images are highly distinct, recognizable and diverse. In contrast, the decoded memory vectors of MEMAE are vague, even the background which should be dark is noisy with gray dots. The superior memory mechanism of MEMGAN should be attributed to our novel adversarial loss and mutual information loss. The former one guarantees that the generated results from the memory units are indistinguishable from normal data judged by $D_{\vx\vz}$. The latter one makes the memory units distinct from one another that prevents slots from memorizing the same thing and spread along the boundary of the convex polytope containing the encoded normal data. The diversity of decoded memory units and the topological property of the memory units space confirm this.

\begin{figure*}[htb]
\centering
  \includegraphics[width=0.95\textwidth]{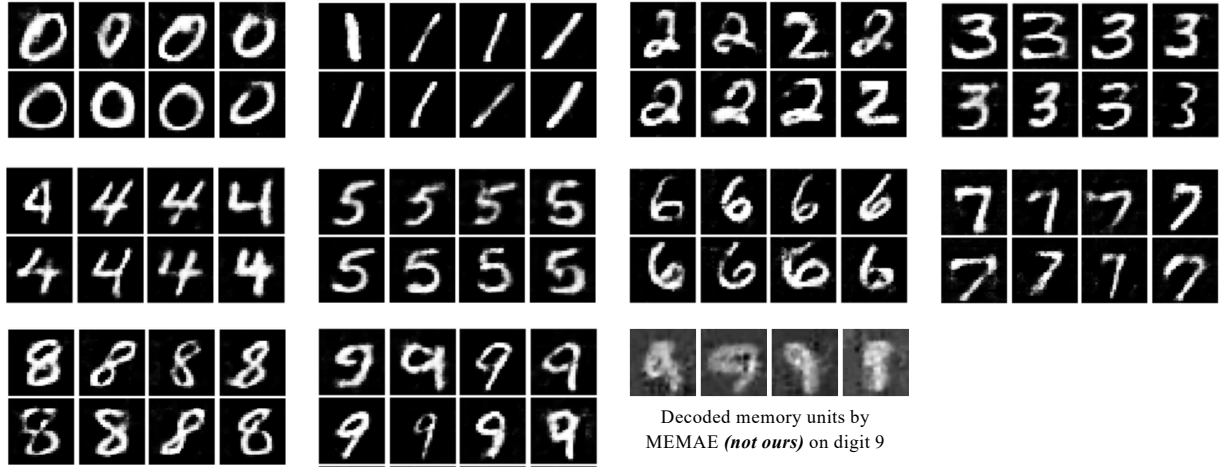}
  \caption{Examples of decoded memory units on MNIST dataset. The sub-figures in the first and second rows, as well as the first two pictures in the third rows are of MEMGAN (ours). The last picture in the third row is directly taken from the original paper of MEMAE (a baseline model also using memory augmentation). The decoded memory units from MEMGAN display more disentangled and recognizable digits than MEMAE.}
  \label{fig:mem}
\end{figure*}

\subsection{Topological Property of the Latent Space in MEMGAN}

Now, we scrutinize the latent space where the memory units $\mM$ live. By optimizing \Cref{eq:adv_loss}, we have the following theorem:
\begin{theorem}
Assume the dimension of the latent variables $d$ and the number of memory units $n$ are large enough. Since $\vz$ is a convex combination of rows in $\mM$, the support of the $\vz$ distribution is a convex polytope $\gS$ in $\R^{d}$, where the vertices of $\gS$ are the memory units. The support of the encoded normal data distribution $E(\vx)$ is also $\gS$.
\end{theorem}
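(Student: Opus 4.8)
The plan is to split the statement into two independent claims and prove them in order: first, that the set of admissible latent variables $\vz$ is a convex polytope $\gS$ whose vertices are exactly the memory units; and second, that the support of the encoded normal data $E(\vx)$ coincides with $\gS$. The first claim is essentially structural, while the second is where the adversarial game does the real work.

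For the first claim, I would start directly from the sampling rule $\vz = \mM^T\valpha$, where $\valpha$ ranges over all vectors with strictly positive entries summing to one. By definition this set of $\vz$ is the relative interior of the convex hull of the rows of $\mM$, and its closure---the support---is the full convex hull, a bounded polytope I call $\gS$. The remaining work is to argue that every memory unit is a genuine vertex of $\gS$ rather than an interior point or a point on a lower-dimensional face. This is exactly where the hypothesis that $d$ and $n$ are large enough is consumed: when $d \geq n-1$, a generic configuration of $n$ points in $\R^d$ is affinely independent, so no memory unit lies in the convex hull of the others and each is therefore an extreme point of $\gS$. I would note that the mutual information loss $l_{\text{mi}}$ actively discourages coincident or degenerate memory units during training, which is what keeps the configuration in this generic regime; I would state this general-position step explicitly rather than fold it silently into the word ``polytope.''

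For the second claim, the key tool is the matching theorem for bidirectional GANs, \Cref{thm:match}, applied to the MEMGAN adversarial objective \Cref{eq:adv_loss}. That objective is precisely the ALI game with the ``real'' joint taken to be $(\vx, E(\vx))$ for $\vx \sim q(\vx)$ and the ``fake'' joint taken to be $(G(\vz), \vz)$ for $\vz \sim \mM$. At the saddle point, \Cref{thm:match} yields $q(\vx, \vz) = p(\vx, \vz)$ as joint distributions over $(\vx,\vz)$, and taking the $\vz$-marginal of both sides forces the law of the encoded data $E(\vx)$ to coincide with the law of $\vz \sim \mM$. Since the latter has support $\gS$ by the first claim, so does the former. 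I would organize this as two inclusions: the containment $\mathrm{supp}(E(\vx)) \subseteq \gS$ is reinforced directly by the projection loss $l_{\text{proj}}$ of \Cref{eq:enc}, whose minimizer forces $E(\vx) = P(E(\vx)) \in \gS$; the reverse inclusion $\gS \subseteq \mathrm{supp}(E(\vx))$ is the substantive part and follows from the marginal-matching argument.

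I expect the main obstacle to be this reverse inclusion. The argument is clean only under the idealization that the adversarial game has reached its exact global saddle point, so that \Cref{thm:match} applies as an equality; away from optimality one obtains only approximate matching, and the conclusion degrades to support equality ``up to the projection and adversarial residuals.'' A secondary delicate point is the passage from equality of distributions to equality of supports, which I would justify by observing that the support depends only on the measure class and is preserved under marginalization. I would therefore present the theorem as a statement about the optimal (saddle-point) solution, with the ``$d$, $n$ large enough'' hypothesis guaranteeing that the memory units occupy the vertices and not interior faces of $\gS$.
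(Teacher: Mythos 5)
Your proposal is correct and follows essentially the same route as the paper's own proof: the support of $\vz$ is the convex hull of the memory units because $\vz$ is sampled as a convex combination of them, and $\mathrm{supp}(E(\vx)) = \gS$ follows by applying \Cref{thm:match} at the saddle point of the adversarial objective and matching the $\vz$-marginals. The extra steps you supply---the general-position argument that each memory unit is genuinely a vertex (consuming the ``$d$, $n$ large enough'' hypothesis), and the observation that equality of distributions implies equality of supports---fill in details that the paper's two-sentence proof leaves implicit, but they do not change the underlying approach.
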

\begin{proof}
 Because $\vz$ is convex combination of memory units, the support of $\vz$ is a convex polytope. As claimed in \Cref{thm:match}, at the optimality of \Cref{eq:ali}, the joint distribution $p(\vx, \vz)$ and $q(\vx, \vz)$ matches. Therefore the marginal distributions, including the supports, should also match, i.e., $E(\vx)$ and $\vz$.
 \end{proof}
 




Given a fixed encoder and generator, consider the memory units optimized using the definition of projection loss and mutual information loss. As mentioned previously, the memory projection loss in \Cref{eq:enc} ensures that the memory units can effectively represent the encoded normal data. The mutual information loss ensures the sampled memory vectors are within the support of $E(\vx)$. Denote the support of $E(\vx)$ as $\gS$. For the purpose of our theoretical analysis, we therefore replace the mutual information loss by the term $\int_{\valpha} 1 - \mathbf{1}_\gS(\sum_i \valpha_i \vm_i) d\valpha$ which expresses the fact that points inside the convex hull of the memory are mapped to the normal data. Note that $\valpha > 0$ and $\sum_i \valpha_i  = 1$. $\mathbf{1}_\gS$ denotes the indicator function of $\gS$. We then prove the following
\begin{proposition}
For the convex hull $\gS$ of a set of $\vz_j$ points, the optimal memory units $\vm$ that minimize the following function ($\valpha > 0$ and $\sum_i \valpha_i  = 1$) are the vertices of $\gS$:
\begin{equation}
\label{eq:poly}
\begin{split}
\min_{\vm} \int_{\valpha} 1 & - \mathbf{1}_\gS(\sum_i \alpha_i \vm_i) d\valpha \\
&+ \sum_{j}\min_{\valpha}\|\sum_{i}\alpha_i \vm_i - \vz_j\|_2
\end{split}
\end{equation}
assuming the number of rows in $\mM$ is not smaller than the number of vertices.

\end{proposition}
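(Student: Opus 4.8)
The plan is to exploit the fact that both terms in \Cref{eq:poly} are nonnegative, so that the objective attains its global minimum precisely when each term vanishes, and then to characterize when this simultaneous vanishing occurs. First I would read off the geometric meaning of the two terms. Writing $C(\vm)$ for the set of strict convex combinations $\sum_i \alpha_i \vm_i$ with $\alpha_i > 0$, $\sum_i \alpha_i = 1$ (so that $\overline{C(\vm)}$ is the convex hull of the memory units), the integrand $1 - \mathbf{1}_\gS(\sum_i \alpha_i \vm_i)$ equals $1$ exactly on those coefficient vectors $\valpha$ whose induced combination escapes $\gS$; hence the first term measures the amount of memory combinations landing outside $\gS$. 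The inner minimization in the second term, $\min_{\valpha}\|\sum_i \alpha_i \vm_i - \vz_j\|_2$, is by definition the Euclidean distance from $\vz_j$ to $\overline{C(\vm)}$, so the second term is the total distance of the data points $\vz_j$ from the convex hull of the memory units.

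Next I would pin down the zero sets of the two terms separately. For the first term, since $\gS$ is the convex hull of finitely many points it is compact, hence closed, with open complement. If every $\vm_i \in \gS$, then by convexity of $\gS$ every convex combination lies in $\gS$ and the integrand vanishes identically, so the first term is $0$. Conversely, if some $\vm_k \notin \gS$, then $\vm_k$ lies at positive distance from the closed set $\gS$; by continuity of $\valpha \mapsto \sum_i \alpha_i \vm_i$, all coefficient vectors in a neighborhood of $\valpha = \ve_k$ intersected with the open simplex map outside $\gS$, and this set has positive measure, forcing the first term to be strictly positive. Thus the first term is $0$ if and only if $\overline{C(\vm)} \subseteq \gS$. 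For the second term, it is $0$ if and only if every $\vz_j \in \overline{C(\vm)}$, equivalently $\gS = \mathrm{conv}\{\vz_j\} \subseteq \overline{C(\vm)}$.

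Combining the two characterizations, the global minimum of \Cref{eq:poly} is $0$, attained exactly when $\overline{C(\vm)} = \gS$. It remains to translate this equality of convex bodies into a statement about the individual memory units, which is the heart of the argument. I would invoke the defining property of an extreme point: a vertex $\vv$ of $\gS$ cannot be written as a nontrivial convex combination of points of $\gS$. Since every memory unit lies in $\gS$ and $\vv \in \overline{C(\vm)}$ gives $\vv = \sum_i \alpha_i \vm_i$ for some convex coefficients, extremality forces $\vm_i = \vv$ for every index $i$ with $\alpha_i > 0$; hence $\vv$ is itself a memory unit. Applying this to each vertex shows that every vertex of $\gS$ must appear among the memory units, and conversely placing the memory units at the vertices makes $\overline{C(\vm)} = \gS$ and drives both terms to $0$. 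Under the hypothesis that $\mM$ has at least as many rows as $\gS$ has vertices, the minimizer therefore assigns memory units to the vertices of $\gS$.

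The step I expect to be the main obstacle is the rigorous treatment of the first term: I must argue that a single memory unit escaping $\gS$ produces a positive-measure set of offending coefficient vectors, which requires closedness of $\gS$ together with a continuity-and-measure argument on the open simplex, rather than merely inspecting the vertex coefficients $\ve_k$ (which have measure zero). A secondary subtlety is the uniqueness claim: surplus memory units beyond the number of vertices are unconstrained inside $\gS$ and affect neither the convex hull nor the objective, so the clean statement ``the optimal memory units are the vertices'' is exact precisely when $n$ equals the number of vertices, and holds only up to these interior degrees of freedom otherwise.
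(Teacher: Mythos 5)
Your proof is correct and follows the same basic strategy as the paper's (much terser) argument: both terms are nonnegative, the vertex configuration drives both to zero, and any deviation from it makes one of the terms positive. Where the paper offers only a two-sentence perturbation sketch, you supply the rigor it is missing: the positive-measure argument showing that a single memory unit escaping the closed set $\gS$ forces the first term to be strictly positive (inspecting only the simplex vertices $\ve_k$ would not suffice, since they have measure zero), and the extreme-point argument showing that equality of the two convex hulls forces every vertex of $\gS$ to appear among the memory units. Your closing caveat is moreover a genuine correction to the paper: its assertion that ``the solution is unique'' is false whenever $n$ exceeds the number of vertices, because surplus memory units placed anywhere inside $\gS$ change neither the hull nor the objective; the clean statement ``the optimal memory units are the vertices'' holds exactly when $n$ equals the number of vertices, and otherwise only up to these interior degrees of freedom, exactly as you say.
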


\begin{proof}
If $\vm$ are the vertices of the convex hull, \Cref{eq:poly} reaches its global minimum value 0. The solution is unique, since perturbing the memory units away from the convex set $\gS$ causes the first term to be positive. Perturbing the memory units inside $\gS$ (or away from the vertices, on the boundary) causes the second term to be non-zero.
\end{proof}

We also ran numerical experiments that confirm the above proposition. As shown in \Cref{fig:poly}, given a set of encoded normal data (orange) bounded in $[-1, 1]^2 \in \R^2$, the obtained memory units (red) by optimizing \Cref{eq:poly} are approximately the vertices of convex hull.

\begin{figure}
\centering
  \includegraphics[width=0.9\columnwidth]{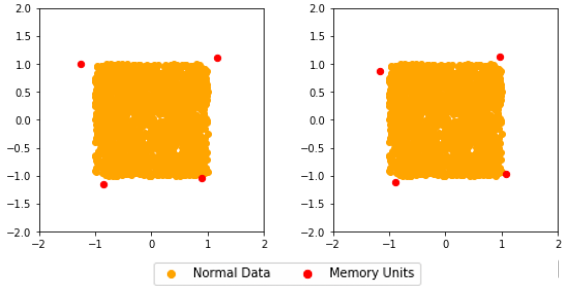}
  \caption{The orange dots denote encoded normal data for training bounded in $[-1, 1]^2$. The red dots represent obtained memory units by optimizing \Cref{eq:poly}. The memory units approximate with the vertices of the convex hull of encoded data.}
  \label{fig:poly}
\end{figure}

\begin{figure*}
\centering
  \includegraphics[width=0.95\textwidth]{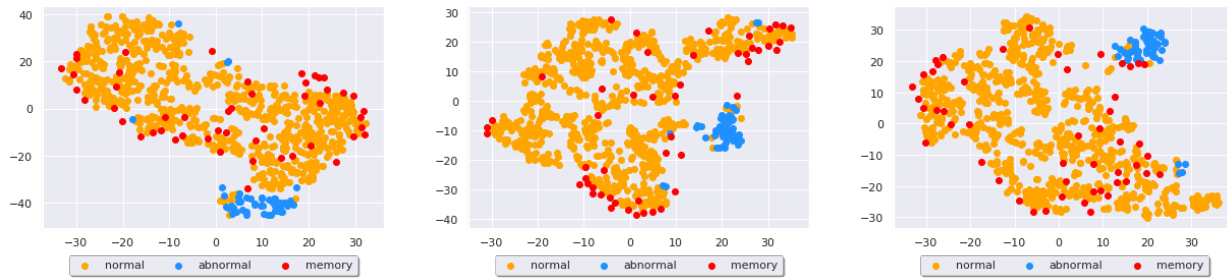}
  \caption{Projections of memory units, normal data and abnormal data onto 2D for MNIST dataset. The memory units (red) form a convex hull for the normal data (orange) and separates them from the abnormal data (blue). This ``isolation'' property yields a strong guarantee for anomaly detection of MEMGAN.}
  \label{fig:hull}
\end{figure*}

Based on the analysis above, we can conclude that at optimality for \Cref{eq:poly}, $E(\vx)$ lies within a convex polytope in the latent space, whose vertices are the memory units. In other words, the training phase of MEMGAN is learning a low dimensional linear representation of the normal data, as well as the memory units which are the vertices of the convex hull. In \Cref{fig:hull}, we visualize the memory units, encoded normal and abnormal data on MNIST dataset on 2D by applying t-SNE (T-distributed Stochastic Neighbor Embedding) algorithm \citep{tsne}. We find that the memory units enclose most of the encoded normal data and lie near the boundary of the cluster of normal data. Meanwhile the memory units are able to separate the normal from the abnormal data. This figure is consistent with the theoretical results although making stronger conclusions is difficult because of the non-linearity in the low-dimensional t-SNE projection. In particular, due to the t-SNE visualization, points on the boundary may appear inside the domain.


\subsection{Detection of Anomalous Data}

Abnormal data are separated from normal ones by residing outside of the convex hull, which eventually leads to larger reconstruction losses for anomalies in the data space (compared with normal data). This is confirmed by our theoretical result and partially illustrated by \Cref{fig:hull}. Consider an abnormal data $\vx_{\text{abn}}$, the projection of its encoding $P(E(\vx_{\text{abn}}))$ will be inside of the convex hull of the memory units, while $E(\vx_{\text{abn}})$ is outside of it. Therefore, the difference between $G(P(E(\vx_{\text{abn}})))$ and $\vx_\text{abn}$ is expected to be large. In contrast, for a normal data $\vx_{\text{nor}} \sim q(\vx)$, since $E(\vx_{\text{nor}})$ are inside the convex hull of memory units, we expect a smaller difference (compared with abnormal data) between $G(P(E(\vx_{\text{nor}})))$ and $\vx_{\text{nor}}$. This distinction provides a strong guarantee for anomaly detection. This claim is also validated by the superior performance of MEMGAN.

\section{Experiments}
\begin{table*}[t]
\caption{Experiment results on MNIST dataset by treating each class as the normal one, evaluated by AUROC. Performance with highest mean is in bold. MEMGAN achieves the highest mean performance in seven out of ten experiments. MEMGAN also has the highest average performance on ten datasets. Standard deviations are also included.}
\vskip 0.1in
\centering
\begin{tabular}{c|ccccccccc}
\toprule
Normal & OCSVM & DCAE & IF & ADGAN & KDE & DSVDD & MEMAE & \textbf{MEMGAN}\\
\midrule
0 & 98.6$\pm$0.0 & 97.6$\pm$0.7 & 98.0$\pm$0.3 & 96.6$\pm$1.3 & 97.1$\pm$0.0 & 98.0$\pm$0.7 & \textbf{99.3$\pm$0.1} & \textbf{99.3$\pm$0.1}\\
1 & 99.5$\pm$0.0 & 98.3$\pm$0.6 & 97.3$\pm$0.4 & 99.2$\pm$0.6 & 98.9$\pm$0.0 & 99.7$\pm$0.1 & 99.8$\pm$0.0 & \textbf{99.9$\pm$0.0}\\
2 & 82.5$\pm$0.1 & 85.4$\pm$2.4 & 88.6$\pm$0.5 & 85.0$\pm$2.9 & 79.0$\pm$0.0 & 91.7$\pm$0.8 & 90.6$\pm$0.8& \textbf{94.5$\pm$0.1}\\
3 & 88.1$\pm$0.0 & 86.7$\pm$0.9 & 89.9$\pm$0.4 & 88.7$\pm$2.1 & 86.2$\pm$0.0 & 91.9$\pm$1.5 & 94.7$\pm$0.6 & \textbf{95.7$\pm$0.4}\\
4 & 94.9$\pm$0.0 & 86.5$\pm$2.0 & 92.7$\pm$0.6 & 89.4$\pm$1.3 & 87.9$\pm$0.0 & 94.9$\pm$0.8 & 94.5$\pm$0.4 & \textbf{96.1$\pm$0.4}\\
5 & 77.1$\pm$0.0 & 78.2$\pm$2.7 & 85.5$\pm$0.8 & 88.3$\pm$2.9 & 73.8$\pm$0.0 & 88.5$\pm$0.9 & \textbf{95.1$\pm$0.1} & 93.6$\pm$0.3\\
6 & 96.5$\pm$0.0 & 94.6$\pm$0.5 & 95.6$\pm$0.3 & 94.7$\pm$2.7 & 87.6$\pm$0.0 & 98.3$\pm$0.5 & 98.4$\pm$0.5 & \textbf{98.6$\pm$0.1}\\
7 & 93.7$\pm$0.0 & 92.3$\pm$1.0 & 92.0$\pm$0.4 & 93.5$\pm$1.8 & 91.4$\pm$0.0 & 94.6$\pm$0.9 & 95.4$\pm$0.2 & \textbf{96.2$\pm$0.2}\\
8 & 88.9$\pm$0.0 & 86.5$\pm$1.6 & 89.9$\pm$0.4 & 84.9$\pm$2.1 & 79.2$\pm$0.0 & \textbf{93.9$\pm$1.6} & 86.9$\pm$0.5 & 93.5$\pm$0.1\\
9 & 93.1$\pm$0.0 & 90.4$\pm$1.8 & 93.5$\pm$0.3 & 92.4$\pm$1.1 & 88.2$\pm$0.0 & 96.5$\pm$0.3 & \textbf{97.3$\pm$0.2} & 95.9$\pm$0.1\\
\midrule
Average & 91.3 & 89.7 & 92.3 & 91.4 & 87.0 & 94.8 & 95.2 & \textbf{96.5}\\
\bottomrule
\end{tabular}
\label{tab:mnist}
\end{table*}

\begin{table*}[t]
\caption{Anomaly detection on CIFAR-10 dataset. Performance with highest mean is in bold. In three out of ten datasets, MEMGAN has the highest performance. MEMGAN achieves the highest (together with DSVDD) average performance on all ten datasets.}
\vskip 0.1in
\centering
\begin{tabular}{c|cccccccccc}
\toprule
Normal & DSVDD & DSEBM & DAGMM & IF & ADGAN & ALAD & MEMAE & \textbf{MEMGAN}\\
\midrule
airplane & 61.7$\pm$4.1 & 41.4$\pm$2.3 & 56.0$\pm$6.9 & 60.1$\pm$0.7 & 67.1$\pm$2.5 & 64.7$\pm$2.6 & 66.5$\pm$0.9 & \textbf{73.0$\pm$0.8}\\
auto. & \textbf{65.9$\pm$2.1} & 57.1$\pm$2.0 & 48.3$\pm$1.8 & 50.8$\pm$0.6 & 54.7$\pm$3.4 & 38.7$\pm$0.8 & 36.2$\pm$0.1 & 52.5$\pm$0.7\\
bird & 50.8$\pm$0.8 & 61.9$\pm$0.1 & 53.8$\pm$4.0 & 49.2$\pm$0.4 & 52.9$\pm$3.0 & \textbf{67.0$\pm$0.7} & 66.0$\pm$0.1 & 62.1$\pm$0.3\\
cat & 59.1$\pm$1.4 & 50.1$\pm$0.4 & 51.2$\pm$0.8 & 55.1$\pm$0.4 & 54.5$\pm$1.9 & \textbf{59.2$\pm$1.1} & 52.9$\pm$0.1 & 55.7$\pm$1.1\\
deer     & 60.9$\pm$1.1 & 73.3$\pm$0.2 & 52.2$\pm$7.3 & 49.8$\pm$0.4 & 65.1$\pm$3.2 & 72.7$\pm$0.6 & 72.8$\pm$0.1 & \textbf{73.9$\pm$0.9}\\
dog      & \textbf{65.7$\pm$0.8} & 60.5$\pm$0.3 & 49.3$\pm$3.6 & 58.5$\pm$0.4 & 60.3$\pm$2.6 & 52.8$\pm$1.2 & 52.9$\pm$0.2 & 64.7$\pm$0.5\\
frog     & 67.7$\pm$2.6 & 68.4$\pm$0.3 & 64.9$\pm$1.7 & 42.9$\pm$0.6 & 58.5$\pm$1.4 & 69.5$\pm$1.1 & 63.7$\pm$0.4 & \textbf{72.8$\pm$0.7}\\
horse    & \textbf{67.3$\pm$0.9} & 53.3$\pm$0.7 & 55.3$\pm$0.8 & 55.1$\pm$0.7 & 62.5$\pm$0.8 & 44.8$\pm$0.4 & 45.9$\pm$0.1 & 52.5$\pm$0.5\\
ship     & \textbf{75.9$\pm$1.2} & 73.9$\pm$0.3 & 51.9$\pm$2.4 & 74.2$\pm$0.6 & 75.8$\pm$4.1 & 73.4$\pm$0.4 & 70.1$\pm$0.1 & 74.1$\pm$0.3\\
truck    & \textbf{73.1$\pm$1.2} & 63.6$\pm$3.1 & 54.2$\pm$5.8 & 58.9$\pm$0.7 & 66.5$\pm$2.8 & 39.2$\pm$1.3 & 38.1$\pm$0.1 & 65.6$\pm$1.6\\
\midrule
Average     & \textbf{64.8} & 60.4 & 54.4 & 55.5 & 61.8 &  59.3 & 56.5 & \textbf{64.8}\\
\bottomrule
\end{tabular}
\label{tab:cifar-10}
\end{table*}

In this section, we evaluate MEMGAN on MNIST and CIFAR-10 datasets. We test on computer vision datasets since images are high dimensional data that can better evaluate MEMGAN's ability. We create anomaly detection datasets from image classification benchmarks by regarding each class as normal for each dataset. The evaluation metric is the area under a receiver operating characteristic curve (AUROC). We first start with the introduction of baseline models.
\subsection{Baseline Models}
In our experiments with real-world dataset, we compare MEMGAN with the following baseline models:

\textbf{Isolation Forests (IF)} ``isolates'' data by randomly selecting a feature and then randomly selecting a split value between the maximum and minimum values of the selected feature to construct trees \citep{liu2008isolation}. The averaged path length from the root node to the example is a measure of normality.

\textbf{Anomaly Detection GAN (ADGAN)} trains a DCGAN on normal data and compute the corresponding latent variables by minimizing the reconstruction error and feature matching loss using gradient descent \citep{schlegl2017unsupervised}. The anomaly score is the reconstruction error.

\textbf{Adversarially Learned Anomaly Detection (ALAD)} trains a bidirectional GAN framework with an extra discriminator to achieve cycle consistency in both data and latent space \citep{Zenati2018AdversariallyLA}. The anomaly score is the feature matching error.

\textbf{Deep Structured Energy-Based Model (DSEBM)} trains deep structured EBM with a regularized autoencoder \citep{zhai2016deep}. The energy score is used as the anomaly score function.

\textbf{Deep Autoencoding Gaussian Mixture Model (DAGMM)} trains a Gaussian Mixture Model for density estimation together with a encoder \citep{zong2018deep}. The probability given by the Gaussian mixture are defined as the anomaly score.

\textbf{Deep Support Vector Data Description (DSVDD)} minimizes the volume of a hypersphere that encloses the encoded representations of data \citep{ruff2018deep}. The Euclidean distance of the data the center of hypersphere is regarded as the anomaly score.

\textbf{One Class Support Vector Machines (OC-SVM)} is a kernel-based method that learns a decision function for novelty detection \cite{scholkopf2000support}. It classifies new data as similar or different to the normal data.

\textbf{Kernel Density Estimation (KDE)} models the normal data probability density function $q(\vx)$ in a non-parametric way \citep{kde}. The anomaly score can be the negative of the learned data probability.

\textbf{Deep Convolutional Autoencoder (DCAE)} trains a regular autoencoder with convolutional neural network \citep{dcae} on normal data. The anomaly score is defined as the reconstruction error.

\textbf{Memory Augmented Autoencoder (MEMAE)} trains an autoencoder with an external memory \citep{memae}. The input to the decoder is a sample of memory vectors. During training, the weight coefficients are sparsified by a threshold.

\subsection{MNIST Dataset}
We test on the MNIST dataset. Ten different datasets are generated by regarding each digit category as the normal class. We use the original train/test split in the dataset. The training set size for one dataset is around 6000. MEMGAN is trained on all normal images in the training set for one class. The evaluation metrics is area under the receiver operating curve (AUROC), averaged on 10 runs. The number of memory units is $n = 50$. The size of one memory unit is $d = 64$. We also explore the effects of number of memory units on the performance on \Cref{sec:dis}. The learning rate is between $10^{-5}$ and $10^{-4}$, varying between different classes. The number of epochs is 7. The configurations of neural networks can be found in the supplementary materials.

MEMGAN shows superior performance compared with all other models, achieving highest mean in seven out of ten classes. MEMGAN also has the best highest average AUROC of ten experiments. MEMGAN clearly outperforms MEMAE and DSVDD in terms of overall performance.

\subsection{CIFAR-10 Dataset}
Similar to the experimental settings in MNIST, we also test MEMGAN on CIFAR-10. Each image category is regarded as the normal class to generate ten distinct datasets. The size of a training dataset is 6000. The number of memory units are chosen to be 100. The size of one memory unit is 256. The learning rate is between $5 \cdot 10^{-5}$ and $10^{-4}$, varying between different classes. The model is trained for 10 epochs. The experiments on MINST and CIFAR-10 are both conducted on a GeForce RTX 2080. The configurations of neural networks are specified in the supplementary material.

Overall, MEMGAN's performance is on a par with previous state-of-the-art models. MEMGAN achieves the highest performance in three out of ten classes and the highest average performance (together with DSVDD). Compared with previous GAN based models (ADGAN and ALAD), MEMGAN exhibits superior performance. Notably MEMGAN outperforms another memory augmented model MEMAE by significant margins. MEMGAN also shows great advantage over non-deep-learning baselines.

\section{Discussion}
\label{sec:dis}
\begin{figure}
\centering
  \includegraphics[width=0.75\columnwidth]{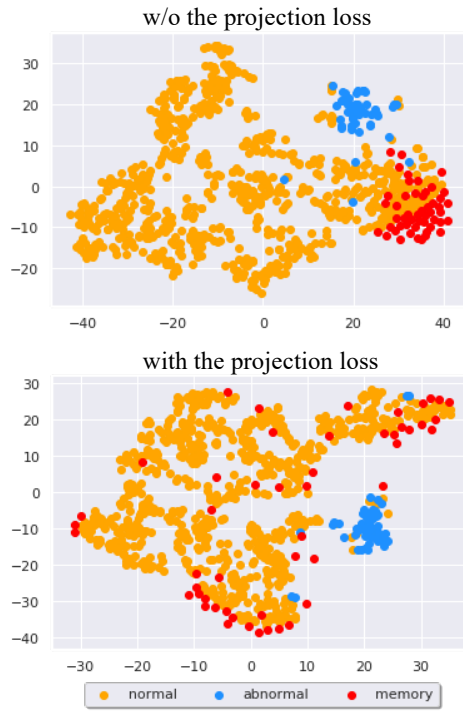}
  \caption{Projections of memory units and data examples with (lower) and without (upper) the memory projection loss. The convex hull of memory units no longer contains the normal data without the memory projection loss.}
  \label{fig:enc_abl}
\end{figure}

\begin{table}
\caption{Sensitivity study on the number of units $n$ tested on MNIST dataset (class 0, 1 and 5). MEMGAN does not show significant sensitivity to $n$.}
\vskip 0.15in
\centering
\begin{tabular}{c|cccc}
\toprule
$n$ & 25 & 50 & 100 & 200\\
\midrule
0 & 99.2$\pm$0.1 & 99.3$\pm$0.1 & 99.1$\pm$0.1 & 99.2$\pm$0.1\\
1 & 99.7$\pm$0.0 & 99.9$\pm$0.0 & 99.8$\pm$0.0 & 99.8$\pm$0.0\\
5 & 93.6$\pm$0.1 & 93.6$\pm$0.3 & 93.2$\pm$0.2 & 93.7$\pm$0.9 \\
\bottomrule
\end{tabular}
\label{tab:sens}
\end{table}

\textbf{Ablation study on the memory projection loss.} We test with and without the memory projection loss in \Cref{eq:enc}, and then visualize the data and memory units in \Cref{fig:enc_abl}. Without the memory projection loss, the memory units tend to cluster together in an area \textbf{with high density of normal data.} In contrast, with the memory projection loss, the memory units are more evenly distributed in the latent space and their convex hull covers most of the encoded normal data. This observation again confirms the effectiveness of the memory projection loss.

\textbf{Sensitivity Study on Number of Memory Units.} We test the performance sensitivity of MEMGAN with respect to the number of memory units.  Results in \Cref{tab:sens} show that MEMGAN is robust to different number of memory units. This means that MEMGAN can effectively learn memory units with different number of memory units.

\section{Conclusion}

In this paper, we introduced MEMGAN, a memory augmented bidirectional GAN for anomaly detection. We proposed a theoretical explanation on MEMGAN's effectiveness and showed that trained memory units enclose encoded normal data. This theory led to the discovery that the encoded normal data reside in the convex hull of the memory units, while the abnormal data are located outside. Decoded memory units obtained by MEMGAN are of much improved quality and disentangled compared to previous methods. Experiments on CIFAR-10 and MNIST further demonstrate quantitatively as well as qualitatively the superior performance of MEMGAN.

\bibliography{icml2020}
\bibliographystyle{icml2020}
\end{document}